%% file: log-CCV-ALT27-extended.tex
\documentclass[preprint]{alt2027}
\title[$\tilde{O}(\sqrt{T})$ Regret and Polylogarithmic Constraint Violation for COCO]
{$\tilde{O}(\sqrt{T})$ Regret and Polylogarithmic Constraint Violation for COCO}

\altauthor{%
 \Name{Dhruv Sarkar} \Email{dhruv.sarkar223@gmail.com}\\
 \addr MIT, Cambridge, USA
 \AND
 \Name{Abhishek Sinha} \Email{abhishek.sinha@tifr.res.in}\\
 \addr Tata Institute of Fundamental Research, Mumbai, India%
}

\usepackage{times}
\usepackage{booktabs}
\usepackage{nicefrac}
\usepackage{microtype}
\usepackage{mathtools}
\usepackage{mathrsfs}
\usepackage{dsfont}
\usepackage{bm}
\usepackage{color}
\usepackage[utf8]{inputenc} % allow utf-8 input
\usepackage[T1]{fontenc}    % use 8-bit T1 fonts
\usepackage{hyperref}       % hyperlinks
\usepackage{url}            % simple URL typesetting
\usepackage{booktabs}       % professional-quality tables
\usepackage{amsfonts}       % blackboard math symbols
\usepackage{nicefrac}       % compact symbols for 1/2, etc.
\usepackage{microtype}      % microtypography
\usepackage{setspace}
\usepackage{dsfont}
\usepackage{amsmath}
\usepackage{amssymb, bm}
\usepackage[dvipsnames,svgnames]{xcolor}
\usepackage{xcolor}
\usepackage{hyperref}
\usepackage{mathrsfs}
\usepackage{mathtools}
\usepackage{algorithm}
\usepackage{algorithmic}
\usepackage{graphicx}

\usepackage{nicefrac}
\usepackage{framed}
\newcommand{\X}{\mathcal X}
\newcommand{\R}{\mathbb R}
\newcommand{\Regret}{\mathsf{Regret}}
\newcommand{\ind}{\mathbf1}
\newcommand{\COCO}{$\mathsf{COCO}$}

\let\geq\geqslant
\let\leq\leqslant
\let\ge\geqslant
\let\le\leqslant

\mathcode`*="8000
{\catcode`*=\active
\gdef*{\star}}

\hypersetup{
    colorlinks=true,
    linkcolor=blue,
    urlcolor=red,
    citecolor=NavyBlue
}

\newtheorem{assumption}{Assumption}
\newtheorem{prelemma}{Lemma}
\newcommand{\CCV}{\mathsf{CCV}}

\newcounter{commentcounter}

\setcitestyle{authoryear,open={(},close={)}}

\begin{document}

\maketitle

\input{abstract}
\begin{keywords}%
Online Learning with constraints, Exponential Weights, Gr\"unbaum's inequality
\end{keywords}
\section{Introduction}
Constrained Online Convex Optimization (\COCO) concerns the problem of optimal sequential  decision-making in the presence of unknown, time-varying, and adversarial constraints \citep{sinha2024optimal, guo2022online, sarkar2026geometric}. It generalizes the classical Online Convex Optimization framework \citep{hazan2022introduction} by requiring the learner to simultaneously minimize both regret and long-term constraint violations. The constrained learning problem lies at the heart of a diverse range  of practical problems including AI safety, alignment, fairness, optimal resource allocation with budget constraints while maintaining the optimal performance.

In this model, at each round $t \in [T]$, an online learner selects an action $x_t$ from a convex decision set $\mathcal{X} \subset \mathbb{R}^d$ of diameter $D.$ After observing the learner's action $x_t$ for the current round, an adaptive adversary reveals a convex cost function $f_t: \mathcal{X} \mapsto \mathbb{R}$ and a convex constraint function $g_t: \mathcal{X} \mapsto \mathbb{R}.$\footnote{All our results and algorithms remain valid without any modification if the constraint functions are assumed to be only \emph{quasiconvex} instead of convex. Recall that a function $h$ is called quasiconvex if all its sublevel sets $\{h(x) \leq c\}$ are convex for every $c \in \mathbb{R}$ \citep{boyd}. Clearly, all convex functions are quasiconvex. Our analysis uses only the convexity of these sublevel sets.} The constraint function corresponds to the constraint $g_t(x) \leq 0.$ Consequently, the learner incurs a cost of $f_t(x_t)$ and a constraint violation of $\max(0, g_t(x_t)).$ Both the cost and constraint functions are assumed to be $G$-Lipschitz. The performance of the online policy is benchmarked against the set of feasible actions that satisfy \emph{all} $T$ constraints in the hindsight. Specifically, we define the feasible set $\mathcal{X}^*$ to be 
\begin{eqnarray} \label{feas-set}
	\mathcal{X}^*= \cap_{t=1}^T\{x \in \mathcal{X}: g_t (x) \leq 0 \}.
\end{eqnarray}
%We make the  \emph{uniform feasibility} assumption: there exists a fixed action 
Throughout the paper we make the following assumption:
\begin{assumption}[Common Feasibility] \label{feas-assump}
	 $\mathcal{X}^* \neq \emptyset.$
\end{assumption} 
We use the standard definition for $\Regret$ against the set of benchmarks $\mathcal{X}^*$ and Cumulative Constraint Violation ($\CCV$) for any online policy: 
\begin{eqnarray*}
	\Regret_T = \sum_{t=1}^T f_t(x_t) - \min_{x^* \in \mathcal{X}^*} \sum_{t=1}^T f_t(x^*), ~~\CCV_T = \sum_{t=1}^T \max\big(0, g_t(x_t)\big).
\end{eqnarray*} 

In hindsight, playing the best fixed action from the set $\mathcal{X}^*,$ which can be determined only at the end, incurs zero $\Regret$ and zero $\CCV$. The objective is to design an \emph{online} policy that achieves the smallest possible regret and CCV. We focus primarily on the dependence on \(T\), treating \(d\), \(G\), and \(D\) as fixed problem parameters. Without any additional assumptions (\emph{e.g.,} strong-convexity, smoothness, Slater condition or otherwise), the state-of-the-art algorithms for $\mathsf{COCO}$ for convex costs and constraints achieve $O(\sqrt{T})$ $\Regret$ and $O(\sqrt{T})$ $\CCV$ \citep{sarkar2026geometric, sinha2024optimal}. In this paper, we present an online learning algorithm that achieves $O(\sqrt{T \log T})$ $\Regret$ and $O((\log T)^2)$ $\CCV$, reducing the $\CCV$ from a polynomial to a polylogarithmic bound. The proposed algorithm is based on continuous-hedge on shrinking decision sets. 

Our main conceptual contribution is a continuous elimination principle for $\mathsf{COCO}$. A constraint violation eliminates a constant fraction of the Hedge probability mass, by Gr\"unbaum's inequality applied to the violated mean. While this probability-mass reduction need not correspond to a comparable reduction in uniform volume, we bridge the two through an adaptive learning rate and a potential combining the surviving volume with the learning rate. When the Hedge density is highly nonuniform, a violation forces a learning-rate decrease; once the density is approximately uniform, the same violation forces a constant fraction of volume to disappear. Thus potential decreases by a constant on every  round with $g_t(x_t)>\delta$, yielding polylogarithmic $\CCV$ while retaining near-square-root regret.

\subsection{Related work}\label{sec:related}

\paragraph{Online optimization with constraints:}
\citet{mannor2009online} initiated the study of online optimization with long-term constraints and time-varying constraint functions. Table \ref{gen-oco-review-table} summarizes several relevant prior works and their guarantees. 
\citet{mahdavi2012trading} studied online optimization with fixed and known constraints, motivated by avoiding the computational cost of projection at the expense of maintaining only long-term feasibility. Earlier works on COCO primarily employ primal-dual techniques and do not fully exploit the geometry of the nested feasible sets, leading to suboptimal $\CCV$ bounds. 
\citet{sinha2024optimal} presented a first-order policy achieving $O(\sqrt{T})$ regret and $O(\sqrt{T}\log T)$ $\CCV$ for \COCO~without additional assumptions. Their approach uses a Lyapunov function of the accumulated constraint violation to weight the constraint in a surrogate loss. 
\citet{pmlr-v313-supantha26a} further refined these results by establishing sublinear universal dynamic regret and $\CCV$ bounds without assuming a common feasible point. 
\citet{vaze2025osqrtt} presented a projection-based algorithm achieving $O(\sqrt{T})$ regret and an instance-dependent $\CCV$ bound. 
Other related papers consider special cases of COCO and its variants, including fixed constraints \citep{yi2021regret}, fixed objective functions \citep{balasundaram2026convex}, projection-free methods \citep{proj-free-Sinha26, proj-free-garber}, strongly convex costs \citep{guo2022online, sinha2024optimal}, stochastic constraints \citep{neely2017online}, and Slater's condition \citep{yu2020low}.

\paragraph{Regret--CCV tradeoffs:}
\citet{sinha2025beyond} constructed a Lyapunov-based surrogate cost function for the constrained experts problem and used a covering argument to extend the analysis to general convex functions. Suppressing fixed Lipschitz and diameter factors, their bounds for a parameter $\theta\in[0,1]$ are
\[
 \Regret_T=O\!\left(\sqrt{dT\log T}+T^\theta+d\log T\right),
 \qquad
 \CCV_T=O\!\left(dT^{1-\theta}\log^2T\right).
\]
At $\theta=1/2$, these bounds give $O_d(\sqrt{T\log T})$ regret and $O_d(\sqrt{T}\log^2T)$ $\CCV$. These guarantees are achieved by a variant of the adaptive Hedge algorithm whose complexity, after a covering-based discretization, scales as $O(T^d)$. In particular, $O((\log T)^2)$ $\CCV$ is achieved at the  expense of a trivial linear regret.
%For smooth and non-negative constraints, in the absence of costs, an Online Gradient Descent-based policy achieves $O(\sqrt{T})$ regret and $O(1)$ $\CCV$ \citep{sinha2025beyond}. 
In contrast, the present paper achieves $O((\log T)^2)$ $\CCV$ without trading-off the near-optimal regret.

\paragraph{Elimination and Continuous Hedge:}
\citet{sarkar2026geometric} established that Active Hedge with elimination achieves $O(\sqrt{T\log N})$ regret and $O(N)$ $\CCV$ for a finite set of $N$ experts. We retain its exponential weighting and permanent elimination of infeasible actions, and extend this approach to the continuous domain. 
The continuous forecaster proposed by \citet{krichene2015} uses a continuous Hedge algorithm with a non-increasing learning rate. We adapt this density to a changing convex support and couple it with Gr\"unbaum's inequality, which quantifies the probability mass removed by each violated constraint \citep{lovasz2007geometry}. 
The adaptive learning-rate schedule and the $\CCV$ potential connect this geometric argument to the Active Hedge algorithm of \citet{sarkar2026geometric}. 
Unlike the projection-based algorithms of \citet{sarkar2026geometric} and \citet{vaze2025osqrtt}, which play on the boundary of the current feasible set, our policy plays the mean of the Hedge distribution. This interior-point property is crucial: whenever the mean violates a constraint, Gr\"unbaum's inequality guarantees that a constant fraction of the Hedge probability mass lies outside the updated feasible set, which enables the improved $\CCV$ bound.  
 \begin{table*}[t]
%\hspace{-30pt}
  \title{Summary of the results for the constrained OCO problem}
  %\centering
  %\hspace{-40pt}
  \begin{tabular}{llllll}
    \toprule
    %\multicolumn{2}{c}{Part}                   \\
   % \cmidrule(r){1-2}
   \small { Reference}  & \small {Regret} & \small {CCV} & \small {Assumptions/Comments} \\
    \midrule
  %  a & b& c& d & e  \\
%      \small {\citet{mahdavi2012trading}}  & \small {$O(\sqrt{T})$} & \small {$O(T^{\nicefrac{3}{4}})$} & \small {Projection} & \small{Time-invariant constraints} \\
 %   \small {\citet{jenatton2016adaptive}}  & \small {$O(T^{\max(\beta, 1-\beta)})$} & \small {$O(T^{1-\beta/2})$} & \small {Projection} & \small{Time-invariant constraints} \\
    \small {\citet{pmlr-v70-sun17a}}  & \small {$O(\sqrt{T})$} & \small {$O(T^{\nicefrac{3}{4}})$}& & \small -  \\
    \small {\citet{neely2017online}}  & \small {$O(\sqrt{T})$} & \small {$O(\sqrt{T})$}  & \small {Slater condition} \\
    %    \small {\citet{yu2017online}}  & \small {$O(\sqrt{T})$} & \small {$O(\sqrt{T})$} & \small {Conv-OPT} & \small {Slater condition} \\
  %\small {\citet{yuan2018online}} & \small {$O(T^{\max(\beta, 1-\beta)})$} & \small {$O(T^{1-\beta/2})$ }  & \small {Projection} & \small{Time-invariant constraints} \\
      \small {\citet{yu2020low}}  & \small {$O(\sqrt{T})$} & \small {$O(1)$}  & \small {Slater \& Time-invariant constraints} \\
 % \citet{yu2017online} & Stochastic & $O(\sqrt{T})$ & $O(\sqrt{T})$& OGD+drift+penalty & Slater condition \\
%  \small {\citet{yi2021regret}} & \small {$O(T^{\max(\beta, 1-\beta)})$} & \small {$O(T^{(1-\beta)/2})$} & \small {Conv-OPT} & \small{Time-invariant constraints} \\ 
%  \small {\citet{yi2022regret}}  & \small {$O(T^{\beta})$} & \small {$O(T^{1-\beta/2})$}  & \small {Strongly convex cost} \\
    \small {\citet{guo2022online}}  & \small {$O(\sqrt{T})$} & \small {$O(T^{\nicefrac{3}{4}})$}  & - \\
     \small {\citet{guo2022online}}  & \small {$O(\log{T})$} & \small {$O(\sqrt{T\log{T}})$}  & \small{Strongly convex costs} \\
     %   \small {\citet{guo2022online}}  & \small {$O(\log T)$} & \small {$O(\sqrt{T \log T})$} & \small {Conv-OPT} & \small {Strongly convex cost} \\ 
  \small {\citet{yi2023distributed}}  & \small {$O(T^{\max(\beta, 1-\beta)})$} & \small {$O(T^{1-\beta/2})$} & \small{$0\leq \beta \leq 1$ is adjustable}\\
    %\small {\citet{yi2023distributed}} & \small {$O(\log(T))$} & \small {$O(\sqrt{T \log T})$} & \small {Conv-OPT} & \small {Strongly convex cost} \\
    %\citet{georgios-cautious} & Adversarial, convex & $S$ & $O(\sqrt{ST})$ & OGD & Known $S$ \\
               %\citet{guo2022online} & Adversarial, strongly-convex & $1$ & $O(\sqrt{T \log T})$ & Convex opt. each round & -do-, Known $\alpha$ \\
   %      \small {\citet{georgios-cautious}}  & &\ \textcolor{red}{($S$,$O(\sqrt{ST})$)}& \small {OGD} & \small {Known $S$} \\
  %\small {\textbf{This paper}}  & \small {$O(\sqrt{T})$} & \small {$O(T^{3/4})$} & \small {Ad-OCO} & - \\
     \small {\citet{sinha2024optimal}} & \small {$O(\sqrt{T})$} & \small {$O(\sqrt{T}\log T)$}  & -\\
     \small {\bf This paper} & \small {$O(\sqrt{T\log T})$} & \small {$O((\log T)^2)$}  & -\\
       \small {\bf This paper} & \small {$O(\log T)$} & \small {$O((\log T)^2)$}  & \small{
       Exp-concave/strongly convex costs}\\
       \bottomrule
  \end{tabular}
  \vspace{5pt}
  \caption{\small{Summary of results on COCO with arbitrary convex costs and convex constraint functions, both of which could be adversarially chosen at every round. Only the variation of the bounds with respect to $T$ is shown, treating all other parameters to be constant.}}
    \label{gen-oco-review-table}
\end{table*}
\input{prelims}

\input{prelim2}

\section{Problem formulation}\label{sec:model}
\subsection{Model and notation}
Let $\X\subset\R^d$ be a compact and convex decision set, with a finite diameter $D>0$. On round $t$, the learner plays an action $x_t\in\X$ and then observes convex, $G$-Lipschitz cost and constraint functions $f_t$ and $g_t$, chosen by an adaptive adversary. Consequently, the algorithm incurs a cost of $f_t(x_t)$ and a constraint violation of $\max\big(0, g_t(x_t)\big)$ for round $t$. We assume a known horizon $T\ge2$ and a nonempty common feasible set given by  
\begin{eqnarray} \label{feas-set}
 \mathcal{X}^\star=\{u\in\X:g_t(u)\le0\text{ for all }1 \le t\le T\}.
\end{eqnarray}

The goal is to design an online policy that minimizes the following two performance measures by competing against the best fixed commonly feasible action from the set $\mathcal{X}^*:$
\begin{equation}\label{eq:metrics}
 \Regret_T=\max_{u\in\X^\star}\sum_{t=1}^T(f_t(x_t)-f_t(u)),
 \qquad \CCV_T=\sum_{t=1}^T(g_t(x_t))_+.
\end{equation}
%For every round,
%the loss oscillation is at most $GD$ and $(g_t(x_t))_+\le GD$, by comparison with any $u\in\X^\star$.
 %For a real function $h$ on $A$, write
%$\operatorname{osc}_A h=\sup_A h-\inf_A h$.
 For technical reasons, we will find it useful to work with the $\delta$-relaxed constraints $g_t(x) \leq \delta$ where $\delta$ is a small positive parameter.  
For any fixed $\delta >0$, define the sequence of $\delta$-feasible sets to be 
\begin{equation}\label{eq:sets}
 K_0=\X,\qquad K_t=K_{t-1}\cap\{x:g_t(x)\le\delta\}.
\end{equation}
Clearly, the sets $\{K_t\}_{t=0}^T$ are nested, compact, and convex. The strict relaxation of the constraints by $\delta >0 $ is essential: even when \(\mathcal{X}^\star\) has zero volume, it guarantees that a \(T^{-1}\)-scaled neighborhood of every feasible comparator survives all eliminations, which will be useful for bounding the $\mathsf{CCV}$.  
%In the sequel, we will also make use of  a sequence of time-varying probability densities $p_t: K_t \to \mathbb{R}_+$ induced by the active Hedge algorithm which will be described later.

\section{The $\mathsf{Active~Hedge}$ Algorithm on a Continuum}\label{sec:algorithm}
For convenience, we start with the following simplification.
\begin{framed}
\paragraph{Preprocessing:} We assume without loss of generality that \(\mathcal{X}\) is full-dimensional in \(\mathbb R^d\). We rescale \(\mathcal{X}\) so that \(\operatorname{vol}_d(X)=1\), in which case the Lebesgue measure coincides with the uniform probability measure on \(\mathcal{X}\). More generally, \(d\) may be taken to be the affine dimension of \(\mathcal{X}\) by restricting the problem to the affine hull of \(\mathcal{X}\). 
 
%We assume that the standard $d$-dimensional (Lebesgue) volume of $\mathcal{X}$ is strictly positive, \emph{i.e.,} $\mathsf{vol}_d(\mathcal{X}) >0.$ In particular, we scale the decision set $\mathcal{X}$ so that $\mathsf{vol}_d(\mathcal{X})=1.$ In this case, the standard Lebesgue measure $dx$ coincides with the uniform measure on $\mathcal{X},$ denoted by $d\mu.$ With this scaling, the probability of any measurable subset under the uniform measure becomes equal to its volume.  In the general case, we can replace the Lebesgue measure $dx$ with $d\mu$ throughout the paper.
\end{framed}

We use an active version of the continuous Hedge algorithm with a non-increasing learning rate schedule $\{\eta_t\}_{t\geq 1}$ \citep{krichene2015}. 
Set the parameters $\delta=\frac{GD}{T}$ and the initial learning rate\footnote{All logarithms in this paper are taken with respect to the natural base.} $\eta_1=\frac1{GD}\min\left\{1,\sqrt{\frac{8 d \log T}{T}}\right\}$. Let $F_{t-1}$ be the cumulative cost function up to and including round $t-1$ \emph{i.e.,} $F_{t-1}(x):=\sum_{s=1}^{t-1} f_s(x).$
The density $p_t(x)$ chosen by the $\mathsf{Active~Hedge}$ policy on round $t$ and the decision $x_t$ played by it are given by 
\begin{equation}\label{eq:hedge}
 p_t(x)=\frac{\ind_{K_{t-1}}(x)e^{-\eta_tF_{t-1}(x)}}{Z_{t-1}(\eta_t)},
 \qquad x_t=\int_{K_{t-1}} x\,p_t(x)\,dx,
\end{equation}
where we have defined the normalization constant
\begin{equation}\label{eq:partition}
 Z_{t-1}(\eta)=\int_{K_{t-1}}e^{-\eta F_{t-1}(x)}\,dx.
\end{equation}
Clearly, $p_1$ is the uniform probability density over the decision set $\mathcal{X}.$
It will be shown in Lemma \ref{lem:volume} that $\mathsf{vol}_d(K_{t-1}) >0$ and hence, the normalization constant is strictly positive. 

Next, we describe how the learning rates are updated. 
Intuitively, as the feasible sets, and consequently, the set of benchmarks, becomes smaller due to the elimination of the infeasible sets, the learning algorithm should be more stable (i.e., smaller update steps) in order to minimize the movement of the successive iterates. More precisely, for any $t\geq 1$, we choose to update the learning rate as 
\begin{eqnarray} \label{eq:rate-dec}
	\eta_{t+1}=\begin{cases}
		\eta_t\quad \text{if}~~ g_t(x_t) \leq \delta \\
		\eta_t\big(1+\frac{0.1}{d\log T}\big)^{-1}\quad \text{o.w.}
	\end{cases}
\end{eqnarray}
The motivation for this particular learning rate update comes from the proof of the regret bound given in Section \ref{sec:regret}. 
\begin{algorithm}[ht]
\caption{Continuous Hedge with Elimination ($\eta_1$)}\label{alg:main}
\textbf{Initialize:} $K_0=\X$, $F_0=0$, $\delta = \frac{GD}{T},\eta_1$.\\
\textbf{For} $t=1,\ldots,T$:
\begin{enumerate}
\item Play the mean $x_t$ of the $\mathsf{Hedge}$ distribution given in Eqn.\ \eqref{eq:hedge}.
\item Observe $f_t,g_t$, set $F_t=F_{t-1}+f_t$, and update the feasible set to $K_t$ as in~\eqref{eq:sets}. 
\item Update the learning rate using Eqn.\ \eqref{eq:rate-dec}.
\end{enumerate}
\end{algorithm}
The following is our main result.
\begin{framed} 
\begin{theorem}\label{thm:main}
Assume that all cost and constraint functions are convex  and $G$-Lipschitz. Then Algorithm~\ref{alg:main}, with initial learning rate  $\eta_1 = \frac1{GD}\min\left\{1,\sqrt{\frac{8d \log T}{T}}\right\}$ achieves $O(GD\sqrt{dT\log T})$ $\Regret$ and $O(GD\,d\log^2T)$ $\CCV$ for any $d$-dimensional convex decision set with diameter at most $D$.
\end{theorem}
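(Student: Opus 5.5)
The proof splits into a $\CCV$ bound, obtained from a potential argument, and a regret bound, obtained from a continuous-Hedge analysis. Throughout, let $N$ be the number of rounds with $g_t(x_t)>\delta$. The rounds with $0<g_t(x_t)\le\delta$ contribute at most $T\delta=GD$ to $\CCV_T$. The remaining rounds contribute at most $GD$ each, since $g_t(u)\le0$ for any $u\in\X^\star$ and $g_t$ is $G$-Lipschitz. Hence $\CCV_T\le GD(1+N)$, and it suffices to show $N=O(d\log^2T)$.

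\textbf{Step 1 (surviving volume).} We first prove a lower bound on $\mathsf{vol}_d(K_T)$, which is what Lemma~\ref{lem:volume} should give. Fix $u\in\X^\star$ and consider the homothetic copy $u+\frac1T(\X-u)$. Every point of it lies within $D/T$ of $u$, so $g_t\le G\cdot D/T=\delta$ there for every $t$. This set therefore survives all eliminations, and
\[
\mathsf{vol}_d(K_T)\ge T^{-d}.
\]

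\textbf{Step 2 (Gr\"unbaum's inequality).} Suppose $g_t(x_t)>\delta$. By the footnote, the set $\{g_t\le\delta\}$ is convex and excludes the mean $x_t$ of $p_t$. We may assume $p_t$ is log-concave on $K_{t-1}$, which holds since $F_{t-1}$ is convex. Gr\"unbaum's inequality for log-concave measures then gives $p_t(K_{t-1}\setminus K_t)\ge 1/e$, so $p_t(K_t)\le 1-1/e$.

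\textbf{Step 3 (potential, the main obstacle).} The difficulty is that losing mass under $p_t$ need not mean losing volume. I would use the potential
\[
\Phi_t=\log\mathsf{vol}_d(K_t)+c\,d\log T\,\log(1/\eta_{t+1}),
\]
or more naturally $\Psi_t=\log\int_{K_t}e^{-\eta_{t+1}(F_t-\min F_t)}\,dx$, which interpolates between the two quantities. The plan is to compare $Z$ on $K_t$ with $Z$ on $K_{t-1}$ at the new, smaller rate $\eta_{t+1}$. By Hölder's inequality, $\eta\mapsto\log Z(\eta)$ is convex, and $Z(\eta)^{1/\eta}$ is monotone. When $\eta$ drops by the factor $(1+\frac{0.1}{d\log T})^{-1}$, the normalized partition function changes by at most a factor $e^{O(1)}$ times a correction. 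I would use this to show that the mass-fraction loss $1-1/e$ from Step 2 translates into a constant decrease of $\Psi$, with any slack absorbed by the $\log(1/\eta)$ term.

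The potential is bounded above by $0$ at the start. It is bounded below by $-d\log T$ (from Step 1) plus the quantity $\eta\,GD\cdot T\cdot(\text{oscillation})$, which I would control by $O(d\log T)$ using the choice of $\eta_1$ and the fact that $F_t-\min F_t$ varies by at most $GD\cdot t$ over a $1/T$-neighborhood. In addition, $\eta_t\ge\eta_1 e^{-0.1N/(d\log T)}$. Combining these, I expect $N=O(d\log^2T)$: each violation costs a constant in $\Psi$, while the total budget is $O(d\log T)$ per "rescaling epoch" of $\eta$, over $O(\log T)$ such epochs. The delicate part is choosing $c$ so that the two cases (nonuniform density, where $\eta$ carries the decrease, versus near-uniform density, where volume carries it) both yield a constant drop.

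\textbf{Step 4 (regret).} Use the standard continuous exponential-weights analysis of \citet{krichene2015} with the potential $\frac1{\eta_t}\log Z_{t-1}(\eta_t)$, restricted to shrinking sets. First, by Jensen's inequality, $f_t(x_t)\le\mathbb E_{p_t}f_t$. Second, Hoeffding's lemma bounds the per-step term by $\eta_t G^2D^2/8$. Third, shrinking $K_{t-1}\to K_t$ only decreases $Z$; the comparator lies in $K_T$, so this is harmless. Fourth, decreasing $\eta$ is handled by monotonicity of $\frac1\eta\log\int e^{-\eta F}$. Finally, comparing against the $T^{-1}$-neighborhood of $u$ from Step 1 gives
\[
\Regret_T\le\frac{d\log T+O(1)}{\eta_T}+\sum_t\frac{\eta_tG^2D^2}{8}+GD.
\]
Since $N=O(d\log^2T)$, we have $\eta_T\ge\eta_1e^{-O(\log T)}$. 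To get $\eta_T=\Theta(\eta_1)$, the potential argument must in fact give a sharper count, namely at most $O(d\log T)$ rate decrements. I would verify this as part of Step 3, since it is what makes the factor $0.1/(d\log T)$ the right choice. Substituting $\eta_1$ then yields $O(GD\sqrt{dT\log T})$.
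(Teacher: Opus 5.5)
Your reduction $\CCV_T\le GD(1+N)$ and Steps~1--2 match the paper, but Step~4 has a genuine gap. By calling the shrinkage $K_{t-1}\to K_t$ ``harmless'' you discard exactly the term that pays for the learning-rate decreases. You are then left with $H/\eta_{T+1}$ ($H=d\log T$), which is $O(H/\eta_1)$ only if there are $O(H)$ counted rounds. Nothing supplies that. The CCV count is $O(H\log T)$, which allows $\eta_{T+1}=\eta_1(1+0.1/H)^{-N}$ to be polynomially smaller than $\eta_1$, and then $H/\eta_{T+1}$ is far larger than $\sqrt{dT\log T}$.

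The missing idea is to keep the elimination factor. Let $q_t$ be the mass of $K_t$ under the loss-updated density $\propto\ind_{K_{t-1}}e^{-\eta_tF_t}$. Then $Z_t(\eta_t)=q_t\,Z_{t-1}(\eta_t)\int p_te^{-\eta_tf_t}\,dx$, so each per-round inequality carries an extra term $\log q_t/\eta_t\le0$. On a counted round, combine your Step~2 bound for $p_t$ with the tilt $e^{-\eta_tf_t}$, whose oscillation is at most $GD$ with $\eta_tGD\le1$. This gives $1-q_t\ge e^{-2}$, hence $\log q_t\le-0.1$. Meanwhile the update rule gives $H(1/\eta_t-1/\eta_{t+1})=-0.1/\eta_t$. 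On other rounds $\eta_{t+1}=\eta_t$ and $q_t\le1$. So $H(1/\eta_t-1/\eta_{t+1})\ge\log q_t/\eta_t$ for every $t$, and telescoping gives $H/\eta_{T+1}+\sum_t\log q_t/\eta_t\le H/\eta_1$. Therefore $\Regret_T\le H/\eta_1+G^2D^2T\eta_1/8+GD$, no matter how small $\eta_{T+1}$ gets. The factor $0.1/(d\log T)$ is tuned for this cancellation, not for the CCV count.

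Step~3 is also not yet a proof, and neither candidate potential works as written.
In $\log\mu(K_t)+c\,d\log T\log(1/\eta_{t+1})$ the two terms move in opposite directions on a counted round: the second rises by about $0.1c$. So nothing decreases when the density is concentrated. With the sign flipped the potential is monotone, but it is not bounded below unless $\eta$ is already controlled, which is circular.
For $\log\int_{K_t}e^{-\eta_{t+1}(F_t-\min F_t)}dx$, the lower bound involves $\eta_{t+1}(F_t(u)-\min F_t)$. This is the gap between the comparator and the minimizer, not an oscillation over the $T^{-1}$-neighbourhood of $u$, and it can be of order $\eta_1GDT=\sqrt{8dT\log T}$.

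What works is a learning-rate term that saturates: $\Psi_t=\log\bigl(\mu(K_{t-1})/T^{-d}\bigr)+H\log(1+GDT\eta_t)$. It is nonnegative by Step~1 and at most $O(d\log^2T)$ initially.
If $GDT\eta_t\ge1$, a counted round lowers the second term by at least $0.02$.
If $GDT\eta_t\le1$, then $\eta_t(\max_{K_{t-1}}F_{t-1}-\min_{K_{t-1}}F_{t-1})\le\eta_t(t-1)GD\le1$, so $p_t$ is within a factor $e$ of uniform on $K_{t-1}$. The Gr\"unbaum mass $1/e$ then forces a volume fraction of at least $e^{-2}$ to disappear, and the first term drops by a constant.
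Hence $N=O(d\log^2T)$.
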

\end{framed}
%\footnotetext{It is also valid and computationally advantageous to update the feasible sets $K_t$ lazily only when a $\delta$-violation occurs. See the discussion in Section \ref{implementation}.}

\paragraph{Tightness of the bounds:}
\citet[Theorem 4.17]{sarkar2026geometric} established that any online algorithm for COCO must have a regret of at least $\Omega(\sqrt{T})$ and $\CCV$ of $\Omega\bigg( \frac{(\log T)^{\frac{d-1}{d+1}}}{\ln \ln T}\bigg).$ Thus, the regret bound is optimal up to a \(\sqrt{\log T}\) factor, while the CCV bound is optimal up to a factor of $O((\log T)^{1+\frac{2}{d+1}})$.
%Hence, it follows that our bounds are tight up to a poly-log factor.   
\paragraph{Observation: Lazy Updates of Feasible Sets}
Suppose the action $x_t$ remains $\delta$-feasible, i.e.,
$g_t(x_t) \leq \delta$. We can then alternatively skip updating the feasible set on round $t$ and retain
$K_t=K_{t-1}$. Equivalently, when no $\delta$-violation occurs at round $t,$ we
can replace the original constraint function $g_t$ with a fictitious constraint $\hat{g}_t=0$ without affecting the algorithm and its analysis. Thus, we 
can safely perform the update in~\eqref{eq:sets} only when $x_t$ is
$\delta$-infeasible (\emph{i.e.}, $g_t(x_t)>\delta$). From the proof of the $\CCV$ bound,
Eq.~\eqref{eq:count} implies that the number of $\delta$-violations, and
hence the total number of feasible-set updates with lazy update is at most
$O((\log T)^2)$. This helps reduce the computational complexity of the algorithm. 
\section{Proof of Theorem \ref{thm:main}}\label{sec:bounds}

\subsection{Two Geometric Results}\label{sec:observations}
The proof of Theorem \ref{thm:main} builds upon the following two geometric results. Let $\mu$ be the uniform probability measure on $\mathcal{X}.$ 
\begin{lemma}[A surviving comparator set]\label{lem:volume}
For any feasible action $u\in\X^\star$, given by Eqn.\ \eqref{feas-set}, construct a scaled and translated copy of $\mathcal{X}$ as $C_u=u+(\X-u)/T.$ The set $C_u$ has the following properties:
\begin{eqnarray}\label{eq:comparator}
 &&(1)~C_u\subseteq K_t\quad, 0\le t\le T,\\
 &&(2)~\mu(C_u)=T^{-d}, \\
 &&(3)~F_T(x)\le F_T(u)+GD, \forall x\in C_u.
\end{eqnarray}
\end{lemma}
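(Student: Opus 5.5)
The plan is to prove the three properties directly from the definition $C_u=u+(\X-u)/T$, using only convexity of $\X$, the diameter bound $D$, and the $G$-Lipschitz property. The preliminary observation is that every point of $C_u$ has the form $x=u+(y-u)/T=(1-\tfrac1T)u+\tfrac1T y$ for some $y\in\X$. Convexity of $\X$ then gives $C_u\subseteq\X$, and the displacement satisfies $\|x-u\|=\|y-u\|/T\le D/T$.

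\textbf{Property (1).} I will proceed by induction on $t$, or simply observe that $K_t=\X\cap\bigcap_{s\le t}\{g_s\le\delta\}$. It therefore suffices to show two things for each $x\in C_u$: that $x\in\X$, which is already done, and that $g_s(x)\le\delta$ for every $s\le T$. Since $u\in\X^\star$, we have $g_s(u)\le0$. The Lipschitz bound then gives
\[
g_s(x)\le g_s(u)+G\|x-u\|\le GD/T=\delta .
\]
This is exactly where the choice $\delta=GD/T$ enters. In the quasiconvex case, Lipschitzness alone still suffices, so nothing further is needed.

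\textbf{Property (2).} The map $y\mapsto u+(y-u)/T$ is an affine map with linear part $\tfrac1T I_d$, so its Jacobian determinant is $T^{-d}$. Hence $\operatorname{vol}_d(C_u)=T^{-d}\operatorname{vol}_d(\X)=T^{-d}$, using the preprocessing normalization $\operatorname{vol}_d(\X)=1$. Because $\mu$ coincides with Lebesgue measure under this normalization, it follows that $\mu(C_u)=T^{-d}$. As a side effect, this shows that $\operatorname{vol}_d(K_t)\ge T^{-d}>0$, which justifies the earlier claim that $Z_{t-1}$ is positive.

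\textbf{Property (3).} Each $f_s$ is $G$-Lipschitz, so $f_s(x)\le f_s(u)+GD/T$ for all $x\in C_u$. Summing over $s=1,\dots,T$ gives
\[
F_T(x)\le F_T(u)+GD .
\]

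There is no real obstacle in this argument. The only point needing care is the measure-theoretic identification of $\mu$ with the normalized Lebesgue measure, together with the full-dimensionality of $\X$. If $\X$ is not full-dimensional, one works in its affine hull with $d$ equal to the affine dimension, and the scaling factor is still $T^{-d}$.
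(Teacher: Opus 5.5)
Your proof is correct and follows the paper's argument essentially step for step. You use convexity together with the Lipschitz bound $g_s(x)\le g_s(u)+G\|x-u\|\le GD/T=\delta$ for (1), the $T^{-d}$ volume scaling under the normalization $\mu(\X)=1$ for (2), and the cost Lipschitz bound over a displacement of at most $D/T$ for (3), where your per-round summation is slightly cleaner than the paper's wording, which calls $F_T$ ``$GDT$-Lipschitz'' where $GT$-Lipschitz is meant.
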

\begin{proof}
(1) Convexity gives $C_u\subseteq\X$. Any element $y \in C_u$ can be expressed as $y = u+ \frac{x-u}{T}$ for some $x \in \mathcal{X}.$ Using the $G$-Lipschitzness of the constraint functions, we have for any $t\geq 1:$
\begin{eqnarray} \label{g_t-feas}
	g_t(y)\le g_t(u)+ G||y-u||\stackrel{(a)}{\le} 0 + \frac{G}{T}||x-u|| \stackrel{(b)}{\leq} \frac{GD}{T}=\delta. 
\end{eqnarray}
where inequality (a) follows from the feasibility of the action $u$ and (b) follows from the fact that the decision set has a finite diameter of $D.$
Inequality \eqref{g_t-feas} shows that $y \in K_t,$ and hence, $C_u \subseteq K_t.$

(2) Since $C_u$ is obtained by first scaling $\mathcal{X}$ by $T^{-1}$ and then translating it, we have
\begin{eqnarray*}
	\mu(C_u) = \mu(\frac{1}{T} \mathcal{X}) = T^{-d}\mu(\mathcal{X})= T^{-d}.
\end{eqnarray*} 

 (3) Note that the cumulative cost function $F_T$ is $GDT$-Lipschitz. Hence, proceeding similar to Eqn.\ \eqref{g_t-feas}, we have for any $x \in C_u:$
\[F_T(x)-F_T(u)\le TGD/T=GD.\]

\end{proof}
Part (1) and (2) of Lemma \ref{lem:volume} imply that  $\mu(K_t)>0$, even if $\X^\star$ is a singleton. Continuity and compactness imply $0<Z_t(\eta)<\infty$ for every $\eta>0$. Inductively, $0<\eta_{t+1}\le\eta_t$, and $\eta_tGD \leq \eta_1 GD \le1, \forall t\geq 1$. 

We now appeal to Lemma \ref{lem:halfspace} to lower bound the probability of the eliminated sets on each constraint violation. 
For technical reasons, we will need to lower bound the probability under a one-step updated interim density $p_{t+\nicefrac{1}{2}}$ as described next. 
%Imagine that, after taking the action $x_t,$ we first observe the next cost function $f_t,$ before observing the constraint function $g_t.$ 
After \(x_t\) is chosen, condition on the realized cost function \(f_t\) and define a new interim density $p_{t+\nicefrac{1}{2}}$ by incorporating \(f_t\) into the current Hedge distribution, before applying the constraint update.
In other words, the interim probability density $p_{t+\nicefrac{1}{2}}$ is defined by replacing $F_{t-1}$ in \eqref{eq:hedge} with the updated cumulative cost $F_t = F_{t-1}+f_t.$ This yields the following expression for the interim density function: 
\begin{eqnarray*}
	p_{t+\nicefrac{1}{2}}(x):=\frac{\ind_{K_{t-1}}(x)e^{-\eta_t F_{t}(x)}}{\int_{K_{t-1}}e^{-\eta_tF_t(x)}\,dx}.
\end{eqnarray*}

After the constraint function $g_t$ is revealed by the adversary, let $q_t$ be the probability that a point in $K_{t-1}$ belongs to the updated feasible set $K_t$ under the interim density $p_{t+\nicefrac{1}{2}}.$ We have \begin{equation}\label{eq:rate}
 q_t=\frac{\int_{K_{t}}e^{-\eta_tF_t(x)}\,dx}{\int_{K_{t-1}}e^{-\eta_tF_t(x)}\,dx} = \frac{Z_t(\eta_t)}{\int_{K_{t-1}}e^{-\eta_tF_t(x)}\,dx}.
\end{equation}

%Next, recall that the mean $x_t$ lies in the convex set $K_{t-1}$. An application of Grunbaum's lemma (Lemma \ref{lem:halfspace}) yields the following. 
The following lemma lower bounds the probability of the eliminated set under both $p_t$ and $p_{t+\nicefrac{1}{2}}.$ See Figure \ref{fig:gb} for an illustration.

\begin{figure*}
\centering
	\includegraphics{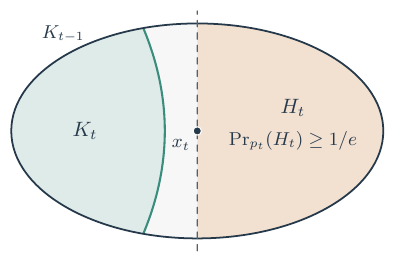}
	\caption{\small{Gr\"unbaum's inequality showing that a large volume of constant hedge probability is eliminated at each round of constraint violation.}}
	\label{fig:gb}
\end{figure*}

\begin{lemma}[A violated mean removes constant $\mathsf{Hedge}$ probability]\label{lem:grunbaum}
If $g_t(x_t)>\delta$, then
\begin{equation}\label{eq:mass}
 \int_{K_{t-1}\setminus K_t}p_t(x)\,dx\ge e^{-1},
 \qquad \int_{K_{t-1}\setminus K_t} p_{t+\nicefrac{1}{2}}(x) dx := 1-q_t \ge e^{-2}.
\end{equation}
\end{lemma}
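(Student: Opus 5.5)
The first bound will follow from Gr\"unbaum's inequality for log-concave measures, which I take to be the content of Lemma~\ref{lem:halfspace} in the preliminaries: if $\nu$ is a log-concave probability measure on $\R^d$ with mean $m$, then every closed halfspace containing $m$ has $\nu$-mass at least $e^{-1}$. The second bound will be a perturbation of the first, using that $p_t$ and $p_{t+\nicefrac12}$ differ by a factor $e^{-\eta_t f_t}$ whose oscillation on $\X$ is at most $\eta_t GD\le 1$.

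\textbf{Step 1 (log-concavity).} The density $p_t$ equals $\ind_{K_{t-1}}e^{-\eta_tF_{t-1}}/Z_{t-1}(\eta_t)$. Here $K_{t-1}$ is convex and $F_{t-1}$ is convex, since it is a sum of convex $f_s$. Hence $p_t$ is log-concave. The same argument applies to $p_{t+\nicefrac12}$. Its mean is $x_t\in K_{t-1}$, because the mean of a probability measure supported on a convex compact set lies in that set.

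\textbf{Step 2 (halfspace separating the eliminated region).} Suppose $g_t(x_t)>\delta$. The set $S=\{x:g_t(x)\le\delta\}$ is convex and closed, since it is a sublevel set and only quasiconvexity is needed. It does not contain $x_t$. By the separating hyperplane theorem, there is a closed halfspace $H\ni x_t$ with $H\cap S=\emptyset$. Take $H=\{x:\langle a,x\rangle\ge\langle a,x_t\rangle\}$ for a strictly separating normal $a$; strict separation is available because $S$ is closed and $x_t\notin S$. Then $H\cap K_{t-1}\subseteq K_{t-1}\setminus K_t$. Applying Gr\"unbaum's inequality to $p_t$, whose mean is $x_t$, gives $\int_{K_{t-1}\setminus K_t}p_t\ge p_t(H)\ge e^{-1}$.

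\textbf{Step 3 (interim density).} The main subtlety is that $x_t$ is the mean of $p_t$, not of $p_{t+\nicefrac12}$. So I will not apply Gr\"unbaum directly to $p_{t+\nicefrac12}$, and will instead compare the two densities. Write $p_{t+\nicefrac12}(x)=p_t(x)e^{-\eta_tf_t(x)}/\mathbb E_{p_t}[e^{-\eta_tf_t}]$. Let $m=\min_{K_{t-1}}f_t$. By $G$-Lipschitzness and the diameter bound, $f_t\in[m,m+GD]$ on $K_{t-1}$. Therefore $e^{-\eta_tf_t(x)}\ge e^{-\eta_t m}e^{-\eta_tGD}\ge e^{-\eta_tm}e^{-1}$, using $\eta_tGD\le1$ from the remark after Lemma~\ref{lem:volume}. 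The denominator satisfies $\mathbb E_{p_t}[e^{-\eta_tf_t}]\le e^{-\eta_tm}$. Hence $p_{t+\nicefrac12}\ge e^{-1}p_t$ pointwise on $K_{t-1}$. Integrating over $K_{t-1}\setminus K_t$ and using Step 2 gives $1-q_t\ge e^{-1}\cdot e^{-1}=e^{-2}$. The identity $1-q_t=\int_{K_{t-1}\setminus K_t}p_{t+\nicefrac12}$ is immediate from \eqref{eq:rate}.
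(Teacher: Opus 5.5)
Your proposal is correct and follows essentially the same route as the paper. The first bound comes from log-concavity of $p_t$ plus Gr\"unbaum's inequality applied to a halfspace through $x_t$ that is disjoint from the surviving set. The second bound transfers this to $p_{t+\nicefrac{1}{2}}$ by noting that the tilt $e^{-\eta_t f_t}$ varies by a factor at most $e^{\eta_t GD}\le e$, which costs one further factor $e^{-1}$. Separating $x_t$ from the sublevel set $\{g_t\le\delta\}$ rather than from $K_t$, and comparing the two densities pointwise rather than as a ratio of integrals, are only cosmetic differences.
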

\begin{proof}
The density $p_t,$ defined in Eqn.\ \eqref{eq:hedge}, is log-concave because the set $K_{t-1}$ and the cumulative cost function $F_{t-1}$ are convex. Furthermore,
due to the violation, its mean $x_t\notin K_t.$ Hence, the convex set consequence~\eqref{eq:convexcut} of Lemma~\ref{lem:halfspace}, with $K=K_{t-1}$ and $C=K_t$, gives the first inequality.

Furthermore,
%The loss update tilts $p_t$ by $e^{-\eta_tf_t}$, giving
\begin{eqnarray*}
 1-q_t&=&\frac{\int_{K_{t-1}\setminus K_t}e^{-\eta_tF_t(x)}\,dx}
 {\int_{K_{t-1}}e^{-\eta_tF_t(x)}\,dx} \\
 &=& \frac{\int_{K_{t-1}\setminus K_t}p_t(x)e^{-\eta_tf_t(x)}\,dx}
 {\int_{K_{t-1}}p_t(x)e^{-\eta_t f_t(x)}\,dx} \\
 &\ge&  \frac{e^{-\eta_t \max_{y \in K_{t-1}} f_t(y)}\int_{K_{t-1}\setminus K_t}p_t(x)\,dx}{e^{-\eta_t \min_{y \in K_{t-1}} f_t(y)}\int_{K_{t-1}}p_t(x)\,dx} \\
 &=& e^{-\eta_t (\max_{y \in K_{t-1}} f_t(y)- \min_{y \in K_{t-1}} f_t(y))} \int_{K_{t-1}\setminus K_t}p_t(x)\,dx\\
 &\stackrel{(a)}{\ge}&  e^{-\eta_t GD}e^{-1} \\
 &\stackrel{(b)}{\ge}& e^{-2},
\end{eqnarray*}
where (a) follows from the Lipschitz continuity of $f_t$ and the first part of the Lemma and (b) follows from the fact that $\eta_t GD \leq \eta_1 GD \le 1, \forall t \geq 1.$
%Finally, $-\log q_t\ge1-q_t \geq e^{-2}$.
\end{proof}

\paragraph{The role of $\delta$.}
Lemma~\ref{lem:grunbaum}'s proof applies to any threshold $\delta\ge0$. The constants do not depend on $\delta$ as long as $\delta \geq 0$.
%If $q_t=0$, the mass bounds still hold, with $\log(1/q_t)=+\infty$, but the next density cannot be normalized.
%For example, take $\X=[0,1]$, $f_1=0$, and $g_1(x)=x$. At $\delta=0$, the initial uniform density has mean $x_1=1/2$, while $K_1=\{0\}$ and $q_1=0$.
%Thus the lemma remains valid although the algorithm cannot continue with the same reference measure.
%Even positive surviving volume alone gives no bound of the form $\mu(K_t)\ge T^{-d}$.
%The choice $\delta=GD/T$ supplies this bound through Lemma~\ref{lem:volume}; it also limits the total contribution of rounds with $g_t(x_t)\le\delta$ to $\delta T=GD$.
%
%\begin{figure}[t]
% \centering
% \includegraphics[width=\linewidth]{figures/continuous_geometry.pdf}
% \caption{The two geometric observations. Left: a relaxed constraint excludes the mean of a log-concave Hedge density. A halfspace through the mean has probability at least $1/e$, all of which is deleted. Right: the set $C_u=u+(\X-u)/T$ lies within distance $D/T$ of a common feasible point $u$, survives all relaxed constraints, and has relative volume $T^{-d}$. The illustrated value $T=4$ makes this copy visible.}
% \label{fig:geometry}
%\end{figure}
%

\subsection{A Potential Function for Bounding the $\CCV$}\label{sec:potentialbackground}
%In the Lyapunov analysis of \citet{sinha2024optimal}, the accumulated positive violation determines the constraint weight in the surrogate loss. Bounding the drift of a potential then controls CCV.
The discrete version of the $\mathsf{Active~Hedge}$ algorithm, proposed by \citet{sarkar2026geometric}, uses a simple potential function that is equal to the number of \emph{surviving experts} to bound the $\CCV.$ The argument is that at least one expert is eliminated at each violation and hence, the cumulative $\CCV$ is upper bounded by the total number of experts. We propose a natural generalization of this idea to the continuous setting by considering the volume of the \emph{surviving sets} in the uniform measure. Unfortunately, this is not enough to bound the $\CCV$ as Gr\"unbaum's inequality in Lemma \ref{lem:halfspace} guarantees a decrease in the probability of the surviving sets only the time-varying measure induced by the Hedge algorithm - \emph{not} in the uniform measure. 

To get around this difficulty, we also include an extra additive term involving the time-varying learning rates $\eta_t$ in the potential function. %
%Suppose a surviving set has relative volume at least $e^{-H}$. Its remaining logarithmic volume is
%$\log(\mu(K_{t-1})/e^{-H})$.
%For $N$ finite experts with uniform reference measure and $H=\log N$, this expression is exactly $\log|S_{t-1}|$.
%This explains the volume term as a continuous counterpart of a count of surviving experts.
%However, a constant amount of Hedge probability can occupy a very small volume when the density is concentrated. A volume term alone therefore does not account for every deletion supplied by Gr\"unbaum's inequality.
Since the variation of the cumulative cost functions 
$F_t$ is at most $GDT$, the density \eqref{eq:hedge} on its support has maximum-to-minimum ratio of at most $e^{GDT\eta_t}$.
When \(GDT\eta_t\geq1\), we charge the progress from elimination to the decrease in the learning rate.  Once $GDT\eta_t\le1$, the density is within a factor $e$ of uniform, and constant deleted probability implies constant deleted volume.
These observations motivate the following potential function:
\begin{equation}\label{eq:potential}
 \Psi_t=\underbrace{\log\frac{\mu(K_{t-1})}{e^{-H}}}_{\textrm{geometric progress}}
       +\underbrace{H\log(1+GDT\eta_t)}_{\textrm{learning-rate progress}},
\end{equation}
with $H \equiv d\log T.$
%The addition of $1$ keeps the second term nonnegative as the rate approaches zero. Its coefficient $H$ matches the normalization of the rate update below.
 Section~\ref{sec:ccv} shows that the two terms together decrease by a constant on every round with $g_t(x_t)>\delta$. Thus the proof uses one potential \eqref{eq:potential} throughout, without a restart or a uniform-volume deletion assumption.

\subsection{Proof of the CCV Bound}\label{sec:ccv}

Let $M$ be the number of rounds with $g_t(x_t)>\delta$. We call such a round to be ``counted". For bounding the $\CCV,$ we will use the potential $\Psi_t.$
By Lemma~\ref{lem:volume}, the potential $\Psi_t$ is non-negative as $\mu(K_t) \geq T^{-d}=e^{-H}$. Both terms are non-increasing, and
\begin{equation}\label{eq:initialpotential}
 \Psi_1\le H[1+\log(1+ GDT \eta_1)].
\end{equation}
We show that every counted round decreases the potential $\Psi_t$ by at least a constant. Consider two cases:

\textbf{\underline{Case-I}} $s:=GDT\eta_t\ge1:$ In this case the second term decreases by
\begin{eqnarray*}
 H\log\frac{1+s}{1+s/(1+0.1H^{-1})}
 &\stackrel{(a)}{\ge}& H\log\frac{2}{1+1/(1+0.1H^{-1})}\\
 &=&H\log \big(1+ \frac{0.1H^{-1}}{2+0.1H^{-1}}\big) \\
 &\stackrel{(b)}{\geq}& H \log \big(1 +  \frac{0.1H^{-1}}{2+0.1/\log 2}\big) \stackrel{(c)}{\geq} 0.02.
\end{eqnarray*}
where (a) follows from the non-decreasing property of the function $x \mapsto \frac{1+x}{1+x/(1+0.1H^{-1})}$, (b) follows from the fact that $H\equiv d\log T\ge\log2,$ and (c) follows from the inequality $\log(1+x) \ge \frac{x}{2},$ valid for $0 \leq x \leq 2.5$.

\textbf{\underline{Case-II}} $GDT\eta_t\le1:$ In this case, we have 
\[\eta_t (\max_{x \in K_{t-1}} F_{t-1}(x)- \min_{y \in K_{t-1}} F_{t-1}(y))\le\eta_t(t-1)GD\le1.\]
Hence, the maximum and minimum values of the Hedge density \eqref{eq:hedge} on $K_{t-1}$ therefore differ by at most a factor $e$. For any measurable $E\subseteq K_{t-1}$, this implies
\[
 \int_Ep_t(x)\,dx\le  \frac{\int_{E}\max_{x \in K_{t-1}} e^{-\eta_t F_{t-1}(x)} dx}{\int_{K_{t-1}}\min_{x \in K_{t-1}} e^{-\eta_t F_{t-1}(x)} dx} \leq  e\,\frac{\mu(E)}{\mu(K_{t-1})}.
\]
Taking $E=K_{t-1}\setminus K_t$ and applying Lemma~\ref{lem:grunbaum} gives a removed volume fraction at least $e^{-2}$. Thus, once the density is approximately uniform, a constant amount of Hedge mass can only be removed by deleting a constant fraction of the underlying volume.

Hence the first term of~\eqref{eq:potential} decreases by at least
\[
 \log\frac{\mu(K_{t-1})}{\mu(K_t)}\ge\log(1+e^{-2})\ge 0.1.
\]
Taking the above two cases together, we conclude that for any counted rounds, the potential decreases by at least $0.02.$ Since the potential is non-increasing and lower bounded by zero, the number of counted rounds must be upper bounded as  
%These two cases analyze the same algorithm and the same potential; there is no restart. 
telescoping gives
\begin{equation}\label{eq:count}
 M\le\frac{\Psi_1}{0.02}\le 50H[1+\log(1+GDT\eta_1)].
\end{equation}
Every counted round contributes at most $GD$ to $\CCV$. Every other round contributes at most $\delta$, so
\begin{equation}\label{eq:ccv}
 \CCV_T\le\delta T+GD\,M \leq GD+ 50 GDd\log T(1+\log(1+GDT\eta_1)).
\end{equation}
Substituting the value of $\eta_1$ proves the $\CCV$ bound. 
%In particular, the argument does not require every violation to remove a constant fraction of uniform volume: Gr\"unbaum controls the current Hedge probability.

\subsection{Proof of the Regret bound}\label{sec:regret}
We extend the proof of the regret bound of the Continuous Hedge algorithm
\citep{krichene2015,sarkar2026geometric}, while additionally accounting for the probability of the eliminated sets. Define the potential function
\begin{equation}\label{eq:freeenergy}
 \Phi_t(\eta):=-\frac1\eta\log Z_t(\eta).
\end{equation}
where $Z_t(\eta)$ is the normalization constant of the continuous $\mathsf{Hedge}$ algorithm \eqref{eq:partition} at the end of round $t$ for the learning rate $\eta >0$. 

We first claim that $\eta \mapsto \Phi_t(\eta)$ is a non-increasing function for any $t\geq 1.$ To see this, consider any $0<\eta'\le\eta$. Applying Jensen's inequality to the concave function $v\mapsto v^{\eta'/\eta}$ under the uniform probability measure yields
\[
 Z_t(\eta')=\int_\X\bigl(\ind_{K_t}e^{-\eta F_t}\bigr)^{\eta'/\eta}\,dx
 \le Z_t(\eta)^{\eta'/\eta}.
\]
%Hence, for any $t\geq 1,$ the potential function $\Phi_t(\cdot)$ is non-increasing in its argument. 
In the above application of Jensen's inequality, we are using the equality of the uniform and Lebesgue measures over $\mathcal{X}.$
The claim follows upon taking natural logarithms of both sides. 

Next, from Eqn.\ \eqref{eq:rate}, observe that the successive normalization constants are related to the probability $q_t$ of the surviving set as: 
\begin{eqnarray*}
	Z_t(\eta_t) = q_t \int_{K_{t-1}} e^{-\eta_t F_{t-1}(x)} e^{-\eta_t f_t(x)} dx \stackrel{(a)}{=} q_t Z_{t-1}(\eta_t)\int_{K_{t-1}} p_t(x) e^{-\eta_t f_t(x)} dx,
\end{eqnarray*} 
where in (a), we have substituted for the $\mathsf{Hedge}$ density $p_t$ from Eqn.\ \eqref{eq:hedge}. Taking logarithm of both sides, the successive potential functions are related as:
\begin{eqnarray} \label{potential-lemma}
	\Phi_t(\eta_t) = \Phi_{t-1}(\eta_t) - \frac{\log q_t}{\eta_t} - \frac1{\eta_t}\log\int_\X p_t e^{-\eta_tf_t}\,dx.
\end{eqnarray}

%Convexity of $f_t$, Hoeffding's lemma with loss oscillation at most $GD$, and the definition of $q_t$ give
Equipped with the above relation, we can upper bound the cost incurred on round $t$ as 
\begin{align} \label{loss-bd}
 f_t(x_t)&\stackrel{(a)}{\le}\int_\X f_t(x) p_t(x)\,dx \nonumber\\
 &\stackrel{(b)}{\le}-\frac1{\eta_t}\log\int_\X p_t e^{-\eta_tf_t}\,dx
          +\frac{G^2D^2\eta_t}{8}\nonumber\\
 &\stackrel{(c)}{=}\Phi_t(\eta_t)-\Phi_{t-1}(\eta_t)
          +\frac{\log q_t}{\eta_t}+\frac{G^2D^2\eta_t}{8} \nonumber\\
 &\stackrel{(d)}{\le} \Phi_t(\eta_t)-\Phi_{t-1}(\eta_{t-1})
          +\frac{\log q_t}{\eta_t}+\frac{G^2D^2\eta_t}{8}        
\end{align}
where (a) follows from the convexity of the cost function $f_t$ with $x_t$ being the mean of $p_t,$ (b) follows from Hoeffding's Lemma \citep[Lemma 2.2]{boucheron2013concentration} used with the fact that the variation of $f_t$ on $\mathcal{X}$ is bounded by $GD$, (c) follows from Eqn.\ \eqref{potential-lemma} and (d) follows from the non-increasing property of the potential function $\Phi_{t-1}(\cdot)$ as $\eta_t \leq \eta_{t-1}$. 
Summing up inequality \eqref{loss-bd} from $t=1$ to $t=T$ and using the fact that $\Phi_0 =0,$ the cumulative cost incurred by the online policy can be upper-bounded as 
\begin{eqnarray} \label{cumulative-loss-bd}
	\sum_{t=1}^T f_t(x_t) \leq \Phi_T(\eta_T) + \frac{G^2D^2}{8}\sum_{t=1}^T\eta_t+
 \sum_{t=1}^T\frac{\log q_t}{\eta_t}.
\end{eqnarray}
It now remains to upper bound the final potential $\Phi_T(\eta_T).$
For every $u\in\X^\star$, Lemma~\ref{lem:volume} yields
\[
 Z_T(\eta)\ge\int_{C_u}e^{-\eta F_T(x)}\,dx
 \ge e^{-H-\eta[F_T(u)+GD]},
\]
and hence $\Phi_T(\eta)\le F_T(u)+GD+H/\eta$. Using this bound in \eqref{cumulative-loss-bd} yields the following regret bound:
\begin{eqnarray} \label{reg-bd1}
 \Regret_T\le\frac H{\eta_{T+1}}+GD+
 \frac{G^2D^2}{8}\sum_{t=1}^T\eta_t+
 \sum_{t=1}^T\frac{\log q_t}{\eta_t}.
\end{eqnarray}
Next, we claim that for any round $t\geq 1,$ we have 
\begin{equation}\label{eq:cancellation}
 H\left(\frac1{\eta_{t}}-\frac1{\eta_{t+1}}\right)
 \geq \frac{\log(q_t)}{\eta_t}.
\end{equation}
The above follows from the learning rate update \eqref{eq:rate-dec} by considering two separate cases. 

\textbf{Case I: $g_t(x_t) \leq \delta:$} In this case, \eqref{eq:cancellation} holds trivially $\eta_t = \eta_{t+1}$ and $q_t \leq 1.$

\textbf{Case II: $g_t(x_t) > \delta:$} In this case from the update rule \eqref{eq:rate-dec}, we have 
\begin{eqnarray*}
	\frac{1}{\eta_t} - \frac{1}{\eta_{t+1}} = -0.1\frac{H^{-1}}{\eta_t} \stackrel{(a)}{\geq} \log(q_t) \frac{H^{-1}}{\eta_t},
\end{eqnarray*}
where (a) follows from Lemma \ref{lem:grunbaum}. 
Substituting the bound \eqref{eq:cancellation} into the regret bound \eqref{reg-bd1} and using the fact that $\eta_t\le\eta_1$, we have 
\begin{equation}\label{eq:regret}
 \Regret_T\le\frac H{\eta_1}+\frac{G^2D^2T\eta_1}{8}+GD.
\end{equation}
For $T\ge8H$, substituting the value of $\eta_1$ gives
$GD\sqrt{TH/2}+GD\le3GD\sqrt{TH}$.
For $T<8H$, use $\Regret_T\le GDT<\sqrt8\,GD\sqrt{TH}$.
This proves the regret bound in Theorem~\ref{thm:main}. 

%\input{implementation}
\input{extension}

\input{conclusion}

\begingroup
%\small
%\setlength{\bibsep}{3pt}
%\bibliographystyle{plainnat}
\bibliography{references, OCO.bib, refs.bib}
\endgroup
\end{document}

%% file: abstract.tex
\begin{abstract}
%We study constrained online convex optimization with adversarial convex losses and constraints (\COCO). At each round $t \in [T]$, a learner selects a point $x_t$ from a $d$-dimensional convex decision set $\mathcal{X}$. Subsequently, an adversary chooses a convex cost function $f_t$ and a convex constraint function $g_t$, resulting in a cost of $f_t(x_t)$ and constraint violation of $\max(0, g_t(x_t))$ for the learner on round $t.$ The goal for the learner is to select actions to simultaneously minimize the regret corresponding to the cost functions and the cumulative constraint violation ($\CCV$), corresponding to the constraint functions. The previously best-known algorithms for \COCO~achieve $O(\sqrt{T})$ regret and $O(\sqrt{T})$ $\CCV$. In this paper, we propose an online policy that achieves $O(\sqrt{T \log T})$ regret and $O(\log^2(T))$ $\CCV$ - improving the $\CCV$ from polynomial to polylogarithmic. Both the bounds are optimal up to a logarithmic factor.  Our main idea is to run the continuous Hedge algorithm on the currently feasible set with non-increasing learning rates which are updated according to the reduction of the feasible sets in some measure. The proof of the regret bound follows from a generalization of the standard arguments while taking into account the probability of the eliminated sets. A potential function involving surviving volume and the learning rate concludes the $\CCV$ bound after appealing to Gr\"unbaum's inequality. 

We study constrained online convex optimization with adversarial convex costs and constraints (\COCO). At each round \(t\in[T]\), a learner selects \(x_t\) from a \(d\)-dimensional convex decision set \(\mathcal X\), after which an adaptive adversary reveals a convex cost function \(f_t\) and constraint function \(g_t\). Consequently, the learner incurs a cost of \(f_t(x_t)\) and a constraint violation of \(\max\{0,g_t(x_t)\}\), and aims to simultaneously minimize regret and cumulative constraint violation ($\CCV$) over the entire horizon. Existing algorithms achieve \(O(\sqrt{T})\) regret and \(\widetilde O(\sqrt{T})\) $\CCV$. We show that an online policy can achieve \(O(\sqrt{T\log T})\) regret and \(O(\log^2 T)\) $\CCV$, reducing the $\CCV$ from polynomial to polylogarithmic while retaining near-optimal regret. When all cost functions are either $\alpha$-exp-concave or $\beta$-strongly convex, the algorithm achieves $O(\log T)$ regret with the same $\CCV$ bound as above. Our approach combines continuous Hedge with elimination on shrinking feasible sets. The key observation is that whenever the mean of the Hedge distribution violates a constraint, Gr\"unbaum's inequality guarantees that a constant fraction of the Hedge probability mass is eliminated. We use an adaptive learning-rate schedule and a potential function coupling the surviving volume with the learning rate to convert this probability-mass reduction into a bound of \(O(\log^2 T)\) on the $\CCV$.

\end{abstract}

%% file: prelims.tex
\section{Precursor: $\mathsf{Active~Hedge~with~Elimination}$} \label{precursor}

The algorithm presented in this paper is inspired by the $\mathsf{Active~Hedge~with~Elimination}$
algorithm \citep[Section~5]{sarkar2026geometric}, which addresses a discrete counterpart of the $\mathsf{COCO}$ problem called
$\mathsf{Constrained~Experts}$. Here, the decision set $\mathcal{X}$ is
the $(N-1)$-dimensional probability simplex $\Delta_{N-1}$, where $N$
is the number of experts. The learner competes against the experts
that satisfy every constraint in hindsight:
\[
\mathcal{X}^*_{\textsf{Experts}}
=
\bigcap_{t=1}^T \{i\in[N]:g_t(i)\leq 0\}.
\]
This benchmark restricts comparison to individual feasible experts,
whereas \eqref{feas-set} allows comparison against all feasible convex
combinations of experts.

The algorithm maintains exponential weights over the experts that
satisfy all previously observed constraints and permanently eliminates
any expert that violates the current constraint. Specifically, it
maintains a set of surviving experts $S_t$ and plays the probability
vector
\begin{equation}\label{eq:finitehedge}
p_t(i)
=
\frac{\ind\{i\in S_{t-1}\}e^{-\eta\sum_{s<t}f_s(i)}}
{\sum_{j\in S_{t-1}}e^{-\eta\sum_{s<t}f_s(j)}},
\qquad
S_t=\{i\in S_{t-1}:g_t(i)\leq 0\},
\quad S_0=[N],
\end{equation}
where $\eta>0$ is a fixed learning rate. Its cost and constraint
violation on round $t$ are $\sum_i p_t(i)f_t(i)$ and
$\max\{0,\sum_i p_t(i)g_t(i)\}$, respectively.
For cost and constraint functions taking values in $[-1,1]$,
\citet[Theorem~5.2]{sarkar2026geometric} establish regret at most
$(\log N)/\eta+\eta T/2$ against the best common feasible expert and
$\CCV$ at most $N$. The latter bound follows from a simple potential-based
argument: every round with positive violation permanently eliminates
at least one expert. The number of surviving experts therefore
decreases on each such round, and each round contributes at most one
unit of violation.

A natural extension of the $\mathsf{Active~Hedge}$ algorithm to general $\mathsf{COCO}$ is to discretize the
decision set using a uniform $d$-dimensional grid and treat the discrete grid points
as experts. At round $t,$ the online algorithm plays the mean vector induced by the $\mathsf{Active~Hedge}$ policy. By the Lipschitz continuity of the cost functions, a grid
spacing of order $T^{-1/2}$ keeps the additional regret due to
discretization within $O(\sqrt{T})$. However, such a grid contains
$N=\Theta(T^{d/2})$ experts, making the $\CCV$ guarantee of \citet{sarkar2026geometric}
uninformative in higher dimensions. The difficulty with this approach
is that it does not exploit the convexity of the decision set or the
cost and constraint functions. In particular, as we will see later, counting only one
elimination per constraint violation can substantially underestimate the
progress made by the algorithm. Instead, we must look at the \emph{mass} of the eliminated sets in an appropriate measure.  

%Moreover, although counting
%eliminated experts gives a finite bound in the discrete problem,
%it has no useful counterpart on a continuum.

We address this difficulty by identifying a potential function that
quantifies the algorithm's progress on a continuous decision set and controls its $\CCV$.
Gr\"unbaum's inequality for log-concave measures, reviewed below, shows that whenever
the mean of the current Hedge distribution violates a constraint,
the eliminated region contains a constant fraction of the
\emph{Hedge probability mass}. We also decrease the learning rate as the
feasible sets shrink, with the decrease determined by the probability
removed. A potential combining the surviving volume and the learning
rate then bounds the number of violating rounds.

%% file: prelim2.tex
\subsection{Log-concavity and Gr\"unbaum's inequality}\label{sec:grunbaum}
A nonnegative function $\rho: \mathbb{R}^d \mapsto \mathbb{R}_+$ is called \emph{log-concave} if for any $x, y \in \mathbb{R}^d,$ we have 
\[\rho(\lambda x+(1-\lambda)y)\ge\rho(x)^\lambda\rho(y)^{1-\lambda}, ~\textrm{for} ~~0\le \lambda \le 1.\]
In particular, normalizing the function $e^{-V}\ind_K : \mathbb{R}^d \mapsto \mathbb{R}_+$ gives a log-concave density when $V$ and $K$ are convex and the integral is finite and strictly positive. We use the following functional form of Gr\"unbaum's inequality.

\begin{prelemma}[Gr\"unbaum; {\citealp[Lemma~5.12]{lovasz2007geometry}}]\label{lem:halfspace}
Let $\rho$ be a log-concave probability density on $\R^d$ with finite first moment and mean $m=\int x\rho(x)\,dx$. For every nonzero $a\in\R^d$,
\begin{equation}\label{eq:halfspace}
 \int_{\{x:\langle a,x-m\rangle\ge0\}}\rho(x)\,dx\ge\frac1e.
\end{equation}
%The statement also holds in the intrinsic affine coordinates of the support.
\end{prelemma}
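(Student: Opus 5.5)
The plan is to reduce to one dimension and then run a convexity argument on the log-tail function of the one-dimensional marginal.

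\textbf{Reduction to one dimension.} Fix a nonzero $a$ and set $Y=\langle a,X-m\rangle$, where $X\sim\rho$. By the Pr\'ekopa--Leindler theorem, marginals of log-concave densities are log-concave. Hence $Y$ has a log-concave density $h$ on $\R$, with finite first moment and $\mathbb{E}[Y]=0$. (If $a$ is orthogonal to the affine hull of the support, then $Y\equiv 0$ and the claim is trivial.) It therefore suffices to show $G(0)\ge 1/e$, where $G(t)=\Pr(Y\ge t)$.

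\textbf{Log-concavity of the tail and a linear minorant.} The tail $G$ of a log-concave density is itself log-concave, again by Pr\'ekopa--Leindler, since $G$ is the integral of $h\cdot\ind_{[t,\infty)}$ over one variable. So $\varphi:=-\log G$ is convex, nondecreasing, and nonnegative, with $\varphi(t)\to 0$ as $t\to-\infty$. Suppose for contradiction that $\varphi_0:=\varphi(0)>1$. Let $c\ge 0$ be a subgradient of $\varphi$ at $0$. If $c=0$, then $\varphi\le\varphi_0$ on $(-\infty,0]$ together with convexity and monotonicity would force $\varphi\equiv\varphi_0$ to the left, contradicting $\varphi\to 0$; so $c>0$. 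Convexity gives
\[
\varphi(t)\ \ge\ \psi(t):=\max\{0,\ \varphi_0+ct\},
\]
so that $G\le e^{-\psi}$ everywhere.

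\textbf{Using the mean-zero condition.} Write $0=\mathbb{E}[Y]=\int_0^\infty G(t)\,dt-\int_{-\infty}^0\bigl(1-G(t)\bigr)\,dt$. We bound the two integrals separately.

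On $t\ge 0$ we have $G(t)\le e^{-\varphi_0-ct}$, so
\[
\int_0^\infty G\ \le\ \frac{e^{-\varphi_0}}{c}.
\]

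On $t<0$ we have $1-G(t)\ge 1-e^{-\psi(t)}$, so
\[
\int_{-\infty}^0 (1-G)\ \ge\ \int_{-\varphi_0/c}^0 \Bigl(1-e^{-(\varphi_0+ct)}\Bigr)dt\ =\ \frac{1}{c}\Bigl(\varphi_0-1+e^{-\varphi_0}\Bigr).
\]

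Equating the two integrals yields $e^{-\varphi_0}\ge\varphi_0-1+e^{-\varphi_0}$, i.e.\ $\varphi_0\le 1$, which is a contradiction. Hence $G(0)\ge 1/e$, which is \eqref{eq:halfspace}. Equality holds for the shifted exponential density $e^{-(t+1)}\ind_{[-1,\infty)}(t)$, which confirms that the constant is sharp.

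\textbf{The convex-set consequence.} The proof of Lemma~\ref{lem:grunbaum} also uses a corollary: if $C$ is convex and $m\notin C$, then the $\rho$-mass outside $C$ is at least $1/e$. By the separating hyperplane theorem there is a nonzero $a$ with $\langle a,x-m\rangle<0$ for all $x\in C$. The halfspace $\{x:\langle a,x-m\rangle\ge 0\}$ is therefore disjoint from $C$, and \eqref{eq:halfspace} gives the claim.

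\textbf{Main obstacle.} The delicate step is the one-dimensional extremal argument, specifically choosing the right comparison. One needs a bound on $G$ from above and on $1-G$ from below simultaneously. Both are delivered by the single linear minorant of the convex function $-\log G$, and the degenerate case $c=0$ has to be ruled out separately. The measure-theoretic facts, namely the log-concavity of marginals and of tails, I would take from Pr\'ekopa--Leindler without reproving them.
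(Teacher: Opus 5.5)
Your proof is correct. The paper gives no proof of its own: it imports the lemma from Lov\'asz--Vempala, and that source argues essentially as you do, pushing $\rho$ forward to the log-concave law of $\langle a,X-m\rangle$ via Pr\'ekopa--Leindler and then comparing the log-concave tail with its supporting exponential at $0$, so that the shifted exponential is extremal.

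Two details deserve a line each. First, note that $G(0)>0$, since otherwise $Y<0$ almost surely, contradicting $\mathbb{E}[Y]=0$. This makes $\varphi_0$ finite and puts $0$ in the interior of the domain of $\varphi$, so a finite subgradient exists. Second, in your convex-set corollary the strict separation you invoke needs $C$ to be closed; the paper takes $C$ compact. For a general convex $C$, use weak separation together with the fact that the hyperplane $\{x:\langle a,x-m\rangle=0\}$ has $\rho$-measure zero.
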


In particular, if $\rho$ is supported on a compact convex set $K$, and a compact convex set $C\subseteq K$ excludes $m$, then Lemma \ref{lem:halfspace} implies that 
\begin{equation}\label{eq:convexcut}
 \int_{K\setminus C}\rho(x)\,dx\ge\frac1e.
\end{equation}
Indeed, the claim is immediate if $C$ is empty. Otherwise strict separation gives $a\ne0$ with
$\sup_{x\in C}\langle a,x\rangle<\langle a,m\rangle$.
The halfspace in~\eqref{eq:halfspace} is disjoint from $C$, proving~\eqref{eq:convexcut}.
It is to be noted that since the mean vector $m$ is computed with respect to the density $\rho,$ the bound \eqref{eq:convexcut} concerns probability under the density $\rho$ only. It does not, in general, imply the same lower bound for the uniform volume.

%% file: extension.tex
%```latex
\section{Extension to Exp-Concave and Strongly Convex Cost Functions} \label{ext}

When Algorithm \ref{alg:main} is applied to the special cases where all cost functions are either $\alpha$-exp-concave or $\beta$-strongly convex, we obtain logarithmic $\Regret$ and polylogarithmic $\CCV$ bounds for appropriate choices of the initial learning rate $\eta_1$ as given in Theorem \ref{thm:main2} below. In both cases, the $\CCV$ bound \eqref{eq:ccv} remains unchanged, yielding an $O((\log T)^2)$ bound upon substituting the corresponding values of $\eta_1$. We therefore focus on establishing the improved regret bounds.

The improved logarithmic regret bound follows from a simplification of the proof of the convex case in Section \ref{sec:regret}, using the fact that, in both cases, the following stronger inequality holds:
\begin{eqnarray} \label{log-MGF}
	f_t(x_t) \le-\frac1{\eta_t}\log\int_\X p_t e^{-\eta_tf_t}\,dx.
\end{eqnarray}
Compared with the general convex case, Eqn.\ \eqref{log-MGF} strengthens inequality (b) in Eqn.\ \eqref{loss-bd} by eliminating the second term, $\nicefrac{G^2D^2 \eta_t}{8}$. Assuming \eqref{log-MGF} holds and following the same steps as in Section \ref{sec:regret}, we obtain the regret bound
\begin{eqnarray*}
	\Regret_T \leq \frac{H}{\eta_1} + GD,
\end{eqnarray*}
which improves upon the bound in \eqref{eq:regret} by removing the aforementioned middle term. Substituting the appropriate value of $\eta_1$ yields the final regret bound in each case. We now establish \eqref{log-MGF} for $\alpha$-exp-concave and $\beta$-strongly convex cost functions for appropriate choices of $\eta_1.$

\paragraph{$\alpha$-exp-concave costs:}
Recall that a function $g(x)$ is called $\alpha$-exp-concave, for $\alpha>0$, if the map $x \mapsto e^{-\alpha g(x)}$ is concave \citep{cesa2006prediction}. If all cost functions are $\alpha$-exp-concave, we set the initial learning rate to
\begin{eqnarray*}
	\eta_1 = \min \big(\alpha, \frac{1}{GD}\big).
\end{eqnarray*}
Since $\eta_t \leq \eta_1 \leq \alpha$, the cost function $f_t$ is $\eta_t$-exp-concave. Hence, \eqref{log-MGF} follows from Jensen's inequality for any distribution $p_t$, provided that $x_t$ is the mean of $p_t$.

\paragraph{$\beta$-strongly-convex costs:}
It is well-known that a $\beta$-strongly convex function with gradients uniformly bounded by $G$ is $\frac{\beta}{G^2}$-exp-concave \citep{hazan2007logarithmic, hazan2022introduction}. Hence, this case reduces to the preceding one. We summarize the resulting bounds in the following theorem.

\begin{framed}
\begin{theorem}\label{thm:main2}
\begin{enumerate}
	\item Assume that all cost functions are $\alpha$-exp-concave and $G$-Lipschitz, and that all constraint functions are convex and $G$-Lipschitz. Then Algorithm~\ref{alg:main}, with initial learning rate
	$
	\eta_1 = \min \big(\alpha, \frac{1}{GD}\big),
	$
	achieves $O((GD+\alpha^{-1})d\log T)$ $\Regret$ and $O(GD\,d\log^2T)$ $\CCV$ for any $d$-dimensional convex decision set of diameter at most $D$.

	\item Assume that all cost functions are $\beta$-strongly convex and $G$-Lipschitz, and that all constraint functions are convex and $G$-Lipschitz. Then, Algorithm~\ref{alg:main}, with initial learning rate
	$
	\eta_1 = \min \big(\frac{\beta}{G^2}, \frac{1}{GD}\big),
	$
	achieves $O((G^2/\beta+GD)d\log T)$ $\Regret$ and $O(GD\,d\log^2T)$ $\CCV$ for any $d$-dimensional convex decision set of diameter at most $D$.
\end{enumerate}
\end{theorem}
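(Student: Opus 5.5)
The plan reuses the proof of Theorem~\ref{thm:main} line by line. The single change is to replace step (b) of \eqref{loss-bd} by the stronger inequality \eqref{log-MGF}. The $\CCV$ part needs no new work. The potential argument of Section~\ref{sec:ccv} uses three facts: Lemma~\ref{lem:grunbaum}, Lemma~\ref{lem:volume}, and $\eta_t GD\le\eta_1GD\le 1$. All three hold because $\eta_1\le 1/GD$ under both new choices. Hence \eqref{eq:count} and \eqref{eq:ccv} give $\CCV_T\le GD+50GDd\log T\,(1+\log(1+GDT\eta_1))$. Since $GDT\eta_1\le T$, the last logarithm is at most $\log(1+T)=O(\log T)$, and so $\CCV_T=O(GD\,d\log^2T)$.

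For the regret in the exp-concave case, I first verify \eqref{log-MGF}. Since $\eta_t\le\eta_1\le\alpha$, the map $x\mapsto e^{-\eta_tf_t(x)}=(e^{-\alpha f_t(x)})^{\eta_t/\alpha}$ is a concave nondecreasing power of a positive concave function. It is therefore concave. Jensen's inequality under $p_t$, with $x_t$ the mean of $p_t$, gives $\int p_te^{-\eta_tf_t}\ge e^{-\eta_tf_t(x_t)}$. Taking $-\frac{1}{\eta_t}\log$ of both sides yields \eqref{log-MGF}.

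Next I redo the chain in \eqref{loss-bd} with \eqref{log-MGF} in place of steps (a)--(b). This gives $f_t(x_t)\le\Phi_t(\eta_t)-\Phi_{t-1}(\eta_{t-1})+\log q_t/\eta_t$, where I use \eqref{potential-lemma} and the monotonicity of $\Phi_{t-1}$. Summing over $t$ and invoking the comparator bound $\Phi_T(\eta)\le F_T(u)+GD+H/\eta$ from Lemma~\ref{lem:volume}, I obtain the analogue of \eqref{reg-bd1} without the $\eta$-sum. I then telescope $H/\eta_{T+1}$ against $\sum_t\log q_t/\eta_t$ using \eqref{eq:cancellation}, which depends only on the rate rule \eqref{eq:rate-dec} and Lemma~\ref{lem:grunbaum}. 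The result is $\Regret_T\le H/\eta_1+GD$. With $\eta_1=\min(\alpha,1/GD)$ we have $1/\eta_1\le \alpha^{-1}+GD$, so $\Regret_T\le d\log T(\alpha^{-1}+GD)+GD=O((GD+\alpha^{-1})d\log T)$.

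For the strongly convex case, I invoke the cited fact that a $\beta$-strongly convex function with gradients bounded by $G$ is $\beta/G^2$-exp-concave on $\X$. Here $G$-Lipschitzness supplies the gradient bound, or the subgradient bound in the nonsmooth version. Applying part~1 with $\alpha=\beta/G^2$ gives $\eta_1=\min(\beta/G^2,1/GD)$ and the stated bound $O((G^2/\beta+GD)d\log T)$.

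The only delicate point is the exp-concavity reduction. The standard statement of that fact carries a factor, for example $\beta/(4G^2)$, and sometimes needs differentiability. I would check it via the second-order condition $\nabla^2 f\succeq \alpha\nabla f\nabla f^\top$, which holds when $\beta I\succeq \alpha G^2 I$. If only $\beta/(cG^2)$ is available, the regret bound changes by a constant factor and the $O(\cdot)$ claim is unaffected.
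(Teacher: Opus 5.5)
Your proposal is correct and follows the paper's argument essentially step for step. The $\CCV$ potential argument goes through unchanged because $\eta_1\le 1/(GD)$. Inequality \eqref{log-MGF} removes the $G^2D^2\eta_t/8$ term and gives $\Regret_T\le H/\eta_1+GD$. The strongly convex case reduces to $\beta/G^2$-exp-concavity, exactly as in the paper.

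There is one sign slip to fix. Jensen's inequality for the concave map $x\mapsto e^{-\eta_t f_t(x)}$ gives $\int p_t e^{-\eta_t f_t}\,dx\le e^{-\eta_t f_t(x_t)}$, not $\ge$. It is this direction that, after applying the decreasing map $v\mapsto-\frac{1}{\eta_t}\log v$, yields \eqref{log-MGF}; the $\ge$ you wrote would give the reverse inequality.
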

\end{framed}

%% file: conclusion.tex
\section{Conclusion}
We show that continuous Hedge with elimination achieves $\tilde{O}(\sqrt{T})$ regret and $O((\log T)^2)$ cumulative constraint violation on a general convex decision set. With an appropriately chosen initial learning rates, the same algorithm achieves $O(\log T)$ regret and $O((\log T)^2)$ $\CCV$ when the cost functions are either $\alpha$-exp-concave or $\beta$-strongly convex. The small constraint relaxation preserves a positive-volume comparator set, while Gr\"unbaum's inequality directly turns a violated mean into deleted Hedge probability. The adaptive learning-rate schedule converts this probability-mass reduction into progress in both the regret analysis and a unified CCV potential. The CCV analysis is based on a potential combining the volume of the surviving feasible set with the adaptive learning rate. Our algorithm involves exact integration over the decision set. Developing an efficient implementation based on log-concave sampling \citep{lovasz2004hit} is an interesting direction for future work. %Our algorithm involves exact integration over the decision set. Efficient implementation with log-concave sampling technique is reserved as a future work. 

\section{Acknowledgment}
This work is supported by the Department of Atomic Energy, Government of
India, under Project No.~RTI4014 and a Google India faculty research award. %The second author thanks Prof.~Piyush
%Srivastava for bringing the reference \citet{lovasz2004hit} to our
%attention.

\section{AI Disclosure}
The authors used GPT-6 Astra during the preparation of this paper to assist with proofreading and polishing the exposition. In addition, GPT-6 Astra suggested the second term in the potential function \eqref{eq:potential}, which the authors initially found nontrivial.
%The authors interacted with GPT 6 Astra during the preparation of this paper to assist proofread and polish the exposition. Furthermore, the second term in the potential function \eqref{eq:potential} was suggested by GPT 6 Astra, which appeared non-trivial to the authors at the first glance. 
\newpage